\def\cX{{\mathcal{X}}}
\def\cY{{\mathcal{Y}}}
\def\real{{\mathbb{R}}}
\newcommand\Itrain{\relax\ifmmode I_{\text{train}}\else $I_{\text{train}}$\xspace\fi}
\newcommand\Ical{\relax\ifmmode I_{\text{cal}}\else $I_{\text{cal}}$\xspace\fi}
\newcommand\Itest{\relax\ifmmode I_{\text{test}}\else $I_{\text{test}}$\xspace\fi}
\newcommand\Bcal{\relax\ifmmode B_{\text{cal}}\else $B_{\text{cal}}$\xspace\fi}
\newcommand\Btest{\relax\ifmmode B_{\text{test}}\else $B_{\text{test}}$\xspace\fi}
\newcommand\E{\mathbb{E}}
\newcommand\Prob{\mathbb{P}}
\def\one{{\mathbb{I}}}
\renewcommand{\P}{\mathbb{P}}
\newcommand{\pr}[1]{\left( #1 \right)}
\newcommand{\cbr}[1]{\left\{ #1 \right\}}
\newcommand*\diff{\mathop{}\!\mathrm{d}}
\newcommand{\diam}{\mathrm{diam}}
\newcommand{\cF}{\mathcal{F}}
\newcommand{\cZ}{\mathcal{Z}}
\newcommand{\cD}{\mathcal{D}}
\DeclareMathOperator{\argmin}{arg min}
\theoremstyle{plain}
\newtheorem{theorem}{Theorem}[section]
\newtheorem{proposition}[theorem]{Proposition}
\newtheorem{lemma}[theorem]{Lemma}
\theoremstyle{definition}
\theoremstyle{remark}
\newcommand{\printfnsymbol}[1]{%
  \textsuperscript{\@fnsymbol{#1}}%
}
\begin{document}

\title{\LARGE \bf
Mitigating LLM Hallucinations via Conformal Abstention
}
\author{Yasin Abbasi-Yadkori\thanks{Equal contribution}, Ilja Kuzborskij\printfnsymbol{1}, David Stutz, Andr\'{a}s Gy\"{o}rgy, Adam Fisch, Arnaud Doucet, Iuliya Beloshapka, Wei-Hung Weng, Yao-Yuan Yang, Csaba Szepesv\'{a}ri, Ali Taylan Cemgil, Nenad Tomasev}

\maketitle

\begin{abstract}
We develop a principled procedure for determining when a large language model (LLM) should abstain from responding (e.g., by saying ``I don't know'') in a general domain, instead of resorting to possibly ``hallucinating'' a non-sensical or incorrect answer. Building on earlier approaches that use self-consistency as a more reliable measure of model confidence, we propose using the LLM itself to self-evaluate the similarity between each of its sampled responses for a given query. We then further leverage conformal prediction techniques to develop an abstention procedure that benefits from rigorous theoretical guarantees on the hallucination rate (error rate). Experimentally, our resulting conformal abstention method reliably bounds the hallucination rate on various closed-book, open-domain generative question answering datasets, while also maintaining a significantly less conservative abstention rate on a dataset with long responses (Temporal Sequences) compared to baselines using log-probability scores to quantify uncertainty, while achieveing comparable performance on a dataset with short answers (TriviaQA). To evaluate the experiments automatically, one needs to determine if two responses are equivalent given a question. Following standard practice, we use a thresholded similarity function to determine if two responses match, but also provide a method for calibrating the threshold based on conformal prediction, with theoretical guarantees on the accuracy of the match prediction, which might be of independent interest. 
\looseness=-1
\end{abstract}

\section{Introduction}
\label{sec:introduction}
\acp{LLM} are excellent at next word prediction. At the same time, however, they are also prone to \emph{hallucination}---that is, confidently generate responses that may look plausible on the surface, but that are actually incorrect or even nonsensical~\cite{ji2023survey, maynez2020faithfulness}. 
Unfortunately, hallucinations are difficult to detect, especially when users are not able to easily verify the factuality of an 
LLM's responses by themselves. In  generation tasks in particular, it can be  challenging to discriminate between hallucinations that present false facts, and any of the many other viable ways of expressing correct information. Therefore, hallucinations can be extremely detrimental towards achieving trustworthy and reliable LLM performance, and hence avoiding or even detecting hallucinations has become one of the most important research topics in LLM research.

In this work, we develop a principled  abstention policy that mitigates LLM hallucination by simply choosing to either produce a single response from the model that is likely to be hallucination-free,  or otherwise abstain from producing a response altogether (e.g., by saying ``I don't know''). The quality of such a policy can be measured by two quantities: the expected proportion of time the method chooses to abstain, and the expected proportion of unfiltered hallucinations in the responses; we will henceforth refer to these as the \emph{abstention rate} and the hallucination \emph{risk}, respectively. 

While directly considering the (log-)probabilities of the response sequence generated by an LLM might be tempting, these probabilities heavily depend on the length of the output sequence, and the likelihood of an answer becomes non-indicative of its correctness as the sequence length grows \citep{MLG2023}.
Therefore, a large body of prior work has attempted to detect hallucinations through either confidence estimation \citep{CZGEDE2023, MLG2023,KuhnARXIV2023,wang2023selfconsistency}
or more involved inference time procedures. 
A consistent observation that has been reported in prior work is that \emph{uncertainty} of the LLM responses, or equivalently, the level of \emph{agreement} between a batch of sampled responses, tends to be a reasonable proxy for detecting hallucinations, although it clearly cannot detect situations where the LLM is completely sure about an incorrect answer.
This approach comes with two immediate challenges: how we can decide if two responses agree for a given question, and what level of disagreement indicates hallucination.

In this paper we address both of these questions, by (i) developing well-engineered prompts to use the LLM for evaluating the similarity of two of its responses for a given query; and (ii) using theoretically well-founded methods to determine the level of agreement in evaluation responses, below which the LLM is likely hallucinating. 
A crucial property of (i) is that the \emph{self}-evaluation prompt depends on the query itself, making it explicit that similarity of two responses depends on the question.
For (ii), we leverage the \emph{conformal prediction} and related risk control techniques \citep{vovk2005algorithmic, bates2021distribution, AngelopoulosARXIV2021c, AngelopoulosARXIV2022}, by assuming access to a small holdout calibration set of prompt-response pairs.
These techniques allow us to \emph{calibrate} the detection/abstention policy so that it satisfies a pre-specified, distribution-free, statistical upper bound on the hallucination risk while minimizing the abstention rate. 
Our method is lightweight as it is only based on prompting and does not require to update the LLM itself, such as by fine-tuning.

We evaluate our method on a variety of closed-book open-domain question answering tasks (using a Gemini Pro model, \citealt{geminiteam2023gemini}). 
In particular, as also observed in parallel work \citep{KuhnARXIV2023,MLG2023}, we find that an instruction-tuned LLM can effectively and efficiently be used not only to generate candidate responses, but also to self-evaluate the coherence among responses; we then use the latter either to select a final response or to choose to abstain. We find that abstention with self-evaluation outperforms log-probability baselines used in the literature \citep{QuachARXIV2023, azaria2023internal}. 

To evaluate the experiments automatically, one needs to determine if two responses are equivalent given a question. A standard way to do this is to  use a thresholded similarity function to determine if two answers match~\citep{QuachARXIV2023}. To select the right threshold, we provide a calibration method, also based on conformal prediction, which comes with theoretical guarantees on the accuracy of the match prediction, and applicable for small calibration datasets (which need to be labelled manually). To our knowledge, this is the first such method presented in the literature, and hence it might be of independent interest.

\section{Problem definition}

We now give a formal definition of the problem we consider and summarize our approach. Let $\cX$ be a space of input prompts and $\cY$ be a space of output responses. Let $m:\cX\times\cY\times\cY\rightarrow \{0,1\}$ be the binary ground-truth match function, so that $m(X; Y',Y)=1$ indicates that response $Y'\in \cY$ matches the response $Y\in\cY$ for a given query $X\in\cX$, and $m(X; Y',Y)=0$ denotes that it does not. That is, given a ground truth response $Y$ to $X$, $m(X; Y',Y)$ is the indicator function whether $Y'$ is semantically equivalent to $Y$ \emph{given} $X$. The conditioning on $X$ makes our model very flexible: While the simplest way to define $m$ could be to check if $Y$ and $Y'$ mean the same thing, our setting can accommodate much broader and more useful definitions, the most appealing of which is whether $Y$ and $Y'$ are equally correct responses to $X$. For example, for the prompt $X=$\emph{``Tell me a European capital.''}, $Y=$\emph{``London''} is as good as $Y'=$\emph{``Paris''}, allowing our method to be applicable for questions with multiple different correct responses, as long as a good match function $m$ can be devised.

Given a classifier (i.e., a possibly random map) $f:\cX\rightarrow\cY$, its loss on the prompt-response pair $(X,Y)$ is defined as $1-m(X;f(X),Y)$.
Our goal is to obtain, given a classifier $f$, a selective classification scheme which can abstain from prediction (answering a prompt) when $f$ would make a mistake. To this end, we define an \emph{abstention} function, which can decide whether the classifier should be applied to a given input prompt $X$. We consider score-base abstention functions, that is, 
for a given parameter $\lambda \in \Lambda$ (where $\Lambda \subset \real$ is a a parameter space), a query $X \in \cX$, and a score function $g:\cX\rightarrow \real$ indicating the model's confidence in classifying the input, the abstention policy $a:\Lambda\times\cX\rightarrow \{0,1\}$ is defined as
\[
a_{\lambda}(X) = 
\left\{ \begin{array}{rcl}
1 & \mbox{if} & g(X) < \lambda \\ 
0 & \mbox{if} & g(X) \ge \lambda
\end{array}\right.
\]
where $a_{\lambda}(X)=1$ means that the predictor should abstain.
Given a query $X$, the score might be a random variable, and therefore $a$, similarly to $f$, might also be random.   
Together the pair $(a_\lambda, f)$ define a
\emph{selective classifier}.

Let $\ell:\cX\times\cY\times\Lambda\rightarrow\real$ be a loss function so that $\ell(X,Y;\lambda)$ is the loss of selective classifier $(a_\lambda, f)$ given a query-response pair $(X,Y)$. $\ell$ penalizes a policy when it does not abstain and its response does not match the label: 
\begin{align}
\label{eq:loss1}
\ell(X,Y; \lambda) &= (1-a_\lambda(X)) (1-m(X; f(X), Y)).
\end{align}
A trivial policy that always abstains would result in a zero loss. However, an interesting policy would also have a small abstention rate. The quality of a policy that can abstain is controlled by: (i) the \emph{risk} $R(\lambda) = \E[\ell(X,Y;\lambda)]$ of producing an incorrect answer on a new query, and (ii) the \emph{rate of abstention} $T(\lambda)=\E[a_\lambda(X)]$, where the expectations are taken over a query-response pair $(X,Y)$ distributed according to $\cD$.

To balance these quantities, we are interested in finding the abstention threshold $\lambda$ resulting in the smallest number of abstentions for a given risk tolerance $\alpha>0$:
\begin{equation}
    \label{eq:rcam}
    \argmin_{\lambda\in\Lambda} T(\lambda)\,,\qquad \text{subject to}\,\qquad R(\lambda) \le \alpha \;.
\end{equation}
Since the abstention rate $T(\lambda)$ is a non-decreasing function of $\lambda$, this is equivalent to finding the smallest $\lambda$ for which $R(\lambda) \le \alpha$; we denote this optimal threshold by $\lambda^*$.

To solve this problem approximately, we assume that we are given a calibration dataset 
\[
D_n = \{(X_1,Y_1),...,(X_n,Y_n)\}\subset \cX\times\cY \,, 
\]
which is a collection of ground truth query-response pairs. We also assume that given a new test point $(X,Y)$ sampled from the true data distribution $\cD$, and that $\{(X,Y),(X_1,Y_1),...,(X_n,Y_n)\}$ are exchangeable\footnote{Jointly distributed random variables $Z_1, \ldots, Z_n$ are exchangeable if for every permutation $\pi$ of $[n]$, $P(Z_1, \ldots, Z_n) = P(Z_{\pi_1}, \ldots, Z_{\pi_n})$.} (which is a generalization of the assumption that they were all selected independently from $\cD$). We will use the calibration dataset $D_n$ to design our abstention policy, that is, to find a $\widehat{\lambda}$ such that we can guarantee $R(\widehat\lambda) \le \alpha$ with high probability, based on $D_n$. Notice that the calibration dataset is much smaller than the training dataset that is used to train the LLM. Before discussing how $\lambda$ is optimized (which is presented in Section~\ref{sec:conformal-abstention}), we first discuss potential choices for the classifier $f$ and the score function $g$ in our context.

\subsection{Choice of the score function $g$ and the classifier $f$}
\label{sec:score-f}

In this section, we discuss the choice of the score function $g$ and the classifier $f$. Let $k$ be an integer. We augment each question-answer pair $(X_i,Y_i)$ with $k$ samples $Y^1_i,\dots, Y^k_i$ generated from the LLM given a query $X$. So a datapoint in the calibration data will be of the form $(X_i, Y_i, Y^1_i,\dots, Y^k_i)$. Notice that in many use cases of \ac{LLM}s, we already generate multiple responses for a given query and output a response based on a number of criteria. So we are not adding a computational overhead here by demanding the existence of $k$ responses. We can choose $k$ to be any number of responses the LLM already generates.

We consider two score functions. The first, called \emph{match count}, is defined with respect to a contextual similarity function $s:\cX\times\cY\times\cY\rightarrow \real$ (that might be different than the match function $m$) and is parameterized by a positive scalar parameter $\beta$. By default, we suggest using LLM prompting to measure similarity of text outputs, but other similarity functions could also be used. For a query $X$, generated responses $Y^1, \dots, Y^k$, and a parameter $\beta$, let the score of response $Y^i$ be the number of other responses that are similar to $Y^i$, that is, $|\{j\neq i\,:\,s(X; Y^i, Y^j) > \beta\}|$. The default response $f(X)$ is a response with the largest score, and the score $g(X)$ is the score of $f(X)$. As explained in the previous section, the policy abstains if the score is below $\lambda$. Otherwise, the policy returns the response $f(X)$. 

Similarly as reported in the literature \citep{MLG2023,KuhnARXIV2023}, we have observed that using LLM prompting as the similarity function works well in practice. Computing the score function then requires $O(k^2)$ extra inferences, which adds significant computational overhead. There are multiple cheaper alternatives. One cheaper alternative is to get similarity of each response with all other responses in a single prompt. This alternative still performs well in practice while being much faster to compute. An even more interesting alternative, called \emph{expected match count}, is the following: for each response $Y^i$, ask the LLM in a single query how many matches exist among other responses $\{j\neq i\,:\, Y^j\}$. Then the score is the expected match count $g(X)=\sum_{i=1}^k q(``i" \mid X) \, i$, where $q$ is the probability of token $``i"$ according to the LLM. In addition to being computationally inexpensive, this score can take any values in interval $[0,k]$, which allows for a more fine-grained and improved optimization. On the other hand, computing this score requires access to the log-probabilities of the LLM, and is not a black-box solution.

Finally, the simplest alternative is to choose $f$ to be the greedy (zero-temperature) output of the LLM (denoted, say, by $Y^1$), and the score of this prediction is the number of similar responses in the randomly selected samples $Y^2,\ldots,Y^k$, as defined either by the match count or the expected match count above. This approach reduces the computation cost of the comparisons by a factor of $k$, and we refer to it as the \emph{greedy} version of the methods.

\subsection{Choice of the match function $m$}
\label{sec:match-function}

Match functions can be naturally derived from similarity scores: two responses match if their similarity score is large enough (i.e., larger than a given threshold). A popular similarity score function, usually defined in term of a response and a true label, is the F1 score~\citep{joshi2017triviaqa, devlin2019bert}, which is calculated as 
\[
F1 = \frac{2 \times \text{precision} \times \text{recall}}{\text{precision} + \text{recall}}\,,
\]
where precision is the percentage of the response words that appear in the label sentence, and recall is the percentage of the label words that appear in the response sentence. When the labels are short sentences, as is the case in our experiments, we can obtain more reliable results using only the recall score. For experiments on the TriviaQA dataset~\citep{joshi2017triviaqa}, with short answers and labels, we use the recall score to evaluate different methods. 

Both the F1 and the recall scores however are poor choices when LLM answers are longer and can be expressed in many forms. For experiments conducted on the Temporal Sequences dataset~\cite{srivastava2023beyond} which we consider in the following, responses are sometimes long texts, and so we use LLM-prompting to decide if the generated answer and the label match, using the same similarity metric as before, by asking the LLM to measure similarity of two texts given the question on a scale of $1-10$. If the score is above a pre-specified threshold, the generated text is considered correct (or a match). The same conformal risk control procedure (discussed in the next section) can be used to verify the validity of this match function choice. The details of the calibration of the match function are presented together with the descriptions of the experiments in \Cref{sec:experiments}.

In the next section, we discuss tuning of the abstention policy and the match function based on the calibration set.

\section{Conformal abstention}
\label{sec:conformal-abstention}

Given the calibration dataset, we want to construct a postprocessing procedure that guarantees that the resulting composite policy (which depends on the calibration data, and hence, is random) is an approximately optimal solution for problem \eqref{eq:rcam}. 

Notice that the loss function is non-increasing in $\lambda$: for $\lambda_1 \le \lambda_2$, if $a_{\lambda_1}(X)=1$, then $a_{\lambda_2}(X)=1$ and both parameters have zero loss. On the other hand, $f$ does not depend on $\lambda$, 
and hence the loss of $\lambda_2$ is smaller than or equal to the loss of $\lambda_1$. Given that calibration data and the test point are exchangeable while the loss function $\ell$ is non-increasing in $\lambda$, then we can use the \emph{\ac{CRC} framework} of \citet{AngelopoulosARXIV2022} to tune $\lambda$.
In particular,
define the average loss
\[
L_n(\lambda) = \frac{1}{n}\sum_{i=1}^n \ell(X_i,Y_i; \lambda) \;
\]
and let
\begin{align}
\label{eq:crc-1}
\widehat\lambda_n = \inf\left\{\lambda\,:\, \frac{n}{n+1}  L_n(\lambda) + \frac{1}{n+1} \le \alpha \right\} \;.
\end{align}
Then, it holds that~\citep{AngelopoulosARXIV2022}
\begin{align}
\label{eq:crc-2}
\E[R(\widehat\lambda_n)] = \E[\ell(X, Y; \widehat\lambda_n)] \le \alpha \;.
\end{align}
The expectation in \eqref{eq:crc-2} is over calibration data as well as the test point.
For completeness, the proof is given in \Cref{sec:crc-expectation}.

The above guarantee is non-trivial: standard confidence-interval-based methods would lead to a solution that uses a more conservative padding of order $O(1/\sqrt{n})$ instead of the smaller $O(1/n)$ padding used in~\eqref{eq:crc-1}. We will discuss this alternative approach, called \ac{RCPS}, in Section~\ref{sec:LTT}.

\subsection{Simple high probability amplification of the CRC procedure}
The \ac{CRC} formulation of \citet{AngelopoulosARXIV2022} given by \eqref{eq:crc-1} and \eqref{eq:crc-2} holds only in expectation over the calibration data.
To have reliable decision making, in practice one typically desires to have confidence guarantees that hold with high probability over the samples. In this and the following section we present several methods that come with high-probability guarantees; some of these approaches will be compared experimentally in \Cref{sec:experiments}.

Clearly, under the assumption that the loss function is non-negative, one can convert a \ac{CRC} guarantee in expectation to a guarantee in probably, for instance through Markov's inequality.
Namely, \eqref{eq:crc-1} and \eqref{eq:crc-2} imply that $\P(R(\hat \lambda) \geq \alpha / \delta) \leq \delta$ for any failure probability $\delta$.
This result is rather weak as it does not hold with high probability.
However, interestingly, we can further improve upon Markov's inequality without extra assumptions through an amplification (or boosting) argument, at the expense of data splitting.
Unlike Markov's inequality, such inequality provides a high-probability guarantee for the \ac{CRC} procedure, however it is looser by a constant factor than the high-probability inequalities we will consider for \ac{RCPS} in the coming section.
\begin{proposition}
  \label{prop:hp}
  Assume that loss function $\lambda \mapsto \ell(z; \lambda)$ is non-increasing, right-continuous, and bounded within $[0, B]$ for any $z$. Assume data $D_n$ is composed of i.i.d. samples. 
  Let the failure probability be $\delta \in (0,1)$ and the desired error be $\alpha > 0$.
  Then, consider an arbitrary partition
  $D_n = (S_1, \ldots, S_K)$ where $K = \lceil \ln(1/\delta) \rceil$,
  and let
\begin{align*}
  \hat \lambda_i = \inf\cbr{\lambda \in \Lambda ~:~ \frac{n}{n + \ln(\frac{1}{\delta})} \, L(S_i; \lambda) + \frac{B \ln(\frac{1}{\delta})}{n + \ln(\frac{1}{\delta})} \leq \alpha}~.
\end{align*}
Then, for $\hat \lambda^* = \max_{i \in [K]} \hat \lambda_i$,
\begin{align*}
  \Prob\pr{R(\hat \lambda^*) \leq e \, \alpha} \geq 1 - \delta~.
\end{align*}
\end{proposition}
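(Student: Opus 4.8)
The plan is to reduce the high‑probability statement to the in‑expectation \ac{CRC} guarantee \eqref{eq:crc-2} applied \emph{per block}, and then amplify via independence of the blocks. There are three moving parts: (i) a monotonicity observation that turns $\max_i\hat\lambda_i$ into $\min_i R(\hat\lambda_i)$; (ii) a per‑block application of \ac{CRC}; (iii) Markov's inequality together with independence, which lets the crude per‑block tail bounds be multiplied rather than union‑bounded.

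First I would note that since $\lambda\mapsto\ell(z;\lambda)$ is non‑increasing for every $z$, the risk $R(\lambda)=\E[\ell(Z;\lambda)]$ is non‑increasing as well; hence $\hat\lambda^*=\max_{i\in[K]}\hat\lambda_i$ satisfies $R(\hat\lambda^*)=\min_{i\in[K]}R(\hat\lambda_i)$, so that $\{R(\hat\lambda^*)>e\alpha\}=\bigcap_{i=1}^K\{R(\hat\lambda_i)>e\alpha\}$. Next, for a fixed block $S_i$, I would observe that the rule defining $\hat\lambda_i$ is the \ac{CRC} calibration rule \eqref{eq:crc-1} applied to the sub‑sample $S_i$ — with the loss bound $B$ replacing $1$, and with $\ln(1/\delta)$ in the denominator playing the role of the single ``virtual'' test point attached to a block of effective size $n/\ln(1/\delta)$ (indeed $\frac{n}{n+\ln(1/\delta)}=\frac{m}{m+1}$ and $\frac{B\ln(1/\delta)}{n+\ln(1/\delta)}=\frac{B}{m+1}$ with $m=n/\ln(1/\delta)$). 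Since the \ac{CRC} proof of \eqref{eq:crc-2} given in \Cref{sec:crc-expectation} uses only exchangeability of $S_i$ with a fresh test point, right‑continuity, and monotonicity of $\ell$ — all available here — it delivers $\E[R(\hat\lambda_i)]\le\alpha$, the expectation being over $S_i$ and the test point.

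With that in hand, since $\ell\ge 0$ (hence $R\ge 0$), Markov's inequality gives $\Prob(R(\hat\lambda_i)>e\alpha)\le \E[R(\hat\lambda_i)]/(e\alpha)\le 1/e$ for every $i$. Because $D_n$ consists of i.i.d.\ samples and the blocks $S_1,\dots,S_K$ are disjoint, the random variables $\hat\lambda_1,\dots,\hat\lambda_K$, and therefore $R(\hat\lambda_1),\dots,R(\hat\lambda_K)$, are mutually independent; combining this with the event identity from the previous paragraph, the Markov bound, and $K=\lceil\ln(1/\delta)\rceil\ge\ln(1/\delta)$,
\begin{align*}
\Prob\pr{R(\hat\lambda^*)>e\alpha}=\prod_{i=1}^K\Prob\pr{R(\hat\lambda_i)>e\alpha}\le e^{-K}\le e^{-\ln(1/\delta)}=\delta\,,
\end{align*}
which rearranges to the claimed $\Prob(R(\hat\lambda^*)\le e\alpha)\ge 1-\delta$.

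The step I expect to require the most care is the per‑block \ac{CRC} application: one must check that the coefficients $n/(n+\ln(1/\delta))$ and $B\ln(1/\delta)/(n+\ln(1/\delta))$ genuinely make the calibration rule at least as conservative as the plain \ac{CRC} rule $\tfrac{m_i}{m_i+1}L(S_i;\cdot)+\tfrac{B}{m_i+1}\le\alpha$ for the true block size $m_i$ — equivalently, that the formula encodes an (essentially equal‑sized) split of $D_n$ into blocks of size $n/\ln(1/\delta)$ — so that the in‑expectation guarantee transfers without loss. Everything downstream — Markov, independence, and collapsing a product of $K$ factors equal to $1/e$ down to $\delta$ using $K\ge\ln(1/\delta)$ — is routine.
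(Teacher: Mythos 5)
Your proposal is correct and follows essentially the same route as the paper's proof: apply the in-expectation \ac{CRC} guarantee to each block, bound each per-block failure probability by $1/e$ via Markov's inequality, multiply these using independence of the disjoint blocks to get $e^{-K}\le\delta$, and use monotonicity of the risk to pass from ``some block succeeds'' to $R(\max_i\hat\lambda_i)\le e\alpha$. The only cosmetic difference is that you Markov-bound directly against the threshold $e\alpha$ rather than against $e\,\E[R(\hat\lambda_i)]$ with a generic $\gamma$ chosen as $1/e$ afterwards, and you correctly flag the same bookkeeping point the paper glosses over (matching the coefficients $n/(n+\ln(1/\delta))$ and $B\ln(1/\delta)/(n+\ln(1/\delta))$ to the plain \ac{CRC} rule for blocks of size $n/K$).
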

In particular, the constant $e$ can be replaced by $c > 1$ while also replacing $\ln(\frac{1}{\delta})$ by $\frac{\ln(1/\delta)}{\ln(c)}$.
\begin{proof}[Proof of \Cref{prop:hp}]
  The proof is based on \eqref{eq:crc-1} and \eqref{eq:crc-2}.
  In particular, for some $\gamma > 0$, consider the following probability:
  \begin{align*}
    \P\pr{\bigwedge_{i=1}^K \quad R(\hat \lambda_i) > \frac{\E[R(\hat \lambda_i)]}{\gamma}}=
    \prod_{i=1}^K \P\pr{R(\hat \lambda_i) > \frac{\E[R(\hat \lambda_i)]}{\gamma}}
    \leq
    \gamma^K,
  \end{align*}
  where we note that the probability factorizes since $(\hat \lambda_i)_i$ are fitted using independent samples
  and the last inequality follows from Markov's inequality.
  Now, choosing $\gamma = 1/e$, we have that
  \begin{align*}
    \P\pr{\exists i \in [K] \quad R(\hat \lambda_i) \leq e \E[R(\hat \lambda_i)] } \geq 1 - e^{-K}\Longrightarrow \qquad
    \P\pr{\exists i \in [K] \quad R(\hat \lambda_i) \leq e \, \alpha } \geq 1 - e^{-K}
  \end{align*}
  where the last step follows by \eqref{eq:crc-2} with $S' = S_i$ and so $|S'| = n / K = n / \lceil \ln(1/\delta) \rceil$.

  Finally, since our result so far only guarantees the existence of one $\hat \lambda_i$ that succeeds with high probability, we use the property that $L(\cdot)$ is non-decreasing to claim that
  \begin{align*}
    \P\pr{R(\max_{i \in [K]} \hat \lambda_i) \leq e \, \alpha }
    = \P\pr{\min_{i \in [K]} R(\hat \lambda_i) \leq e \, \alpha }\geq \P\pr{\exists i \in [K] \quad R(\hat \lambda_i) \leq e \, \alpha }~.
  \end{align*}
The statement then follows.
\end{proof}

\subsection{Bounding $|\lambda^* - \widehat{\lambda}_n|$}

Although the \ac{CRC} procedure ensures that the constraint inequality in \eqref{eq:rcam} is satisfied, it provides no guarantees on how the abstention rate of the solution $\widehat{\lambda}_n$ deviates from the optimal abstention rate, i.e. a bound on $|T(\lambda^*) - T(\widehat{\lambda}_n)|$.  

Assuming that the abstention rate $T$ and risk $R$ are differentiable functions of the threshold $\lambda$,
under the assumptions of Theorem~2 of \citet{AngelopoulosARXIV2022}, we have
\begin{align*}
    \mathbb{E}[R(\hat{\lambda})] \geq \alpha - \frac{2}{n+1} \geq R(\lambda^*) - \frac{2}{n+1} \implies R(\lambda^*) - \mathbb{E}[R(\hat{\lambda})] \leq \frac{2}{n+1}.
\end{align*}
Let $\gamma = \sup_\lambda \frac{dT(\lambda)}{dR(\lambda)}$ and assume it is finite.
Then,
\begin{align*}
T(\lambda^*) - \mathbb{E}[T(\hat{\lambda})] &= \mathbb{E}[T(\lambda^*) - T(\hat{\lambda})]\\
&\leq \mathbb{E}\left[\gamma(R(\lambda^*) - R(\hat{\lambda}))\right]\\
&\leq \frac{2}{n+1}\gamma \;.
\end{align*}
\section{\acl{RCPS}}
\label{sec:LTT}

Motivated by the need for high-probability guarantees over the calibration data, \citet{AngelopoulosARXIV2021c} also introduced another family of methods, called \ac{RCPS}. Indeed, for a loss bounded by one as we have, we can get for $\widehat \lambda_n$ obtained using conformal risk control the following high-probability result:
\begin{equation}\label{eq:probabound}
    \mathbb{P}\Big(\E[ \ell(X, Y; \widehat \lambda_n) \mid D_n ] \leq \alpha +c(\delta,\alpha,n)\Big)\geq 1-\delta,
\end{equation}
for $c(\delta,\alpha,n)=u(\delta,n)-v(\alpha,n)$, where the probability is over the calibration set $D_n$ and
\begin{align*}
u(\delta,n)&=\sqrt{\frac{\log(\frac{1}{\delta})}{2n}} \quad \text{ and } \quad v(\alpha,n)=\frac{1-\alpha}{n}.
\end{align*}
This follows from the fact that $\hat \lambda_n$ can be rewritten as
\begin{align*}
\widehat{\lambda}_n&=\inf\left\{\lambda\,:\, \frac{n}{n+1}  L_n(\lambda) + \frac{1}{n+1} \le \alpha \right\} \nonumber\\
&=\inf \Bigl\{\lambda: L_n(\lambda)+v(\alpha,n) \leq \alpha \Bigr\} \nonumber\\
&=\inf \Bigl\{\lambda: L_n(\lambda)+ u(\delta,n) \leq \alpha+ u(\delta,n)-v(\alpha,n) \Bigr\} \nonumber\\
&=\inf \Bigl\{\lambda: L_n(\lambda)+ u(\delta,n) \leq \alpha+ c(\delta,\alpha,n)\Bigr\} \label{eq:monotonic-no-union-2}
\end{align*}
This identity shows that applying conformal risk control is equivalent to applying the distribution-free \ac{RCPS} procedure of \citet{bates2021distribution} for a Hoeffding \ac{UCB} on the empirical risk at level $\alpha+c(\delta,\alpha,n)$. It then follows from Theorem~2 of \citet{bates2021distribution} that \eqref{eq:probabound} holds. 

However,
the \ac{RCPS} approach is more general than \ac{CRC} as it is applicable even if the loss function is non-monotonic.\footnote{When the loss function is non-monotonic we can no longer rely on arguments as in \eqref{eq:probabound}, however we can still apply confidence bounds discussed here by making them hold uniformly over a finite parameter set $\Lambda$ through the union bound argument. In such case, $\delta$ is replaced by $\delta/|\Lambda|$.}
Let $\delta\in (0,1)$ be a failure probability. We want to choose $\widehat \lambda$ to ensure that
\[
\Prob\Big(  \E[ \ell(X, Y; \widehat\lambda) | D_n ] \le \hat R_{\mathrm{ub}}(\widehat\lambda) \Big) \ge 1-\delta \;.
\]
Here, $\Prob$ is again over the random calibration set $D_n$.
In the following we consider several upper confidence bounds for \ac{RCPS}, some of which were already discussed by \citet{bates2021distribution}.
\paragraph{Baseline confidence bounds}
Among \ac{RCPS} methods
we first consider the \emph{empirical Bernstein inequality}~\citep{audibert2007tuning,maurer2009empirical}.
In this case, the upper bound is computed as
\begin{align*}
  \hat R_{\mathrm{ub}\text{-}\mathrm{bern}}(\lambda) =
  L_n(\lambda)
  +
  \sqrt{\frac{2 \widehat{\mathrm{Var}}(\lambda) \, \ln(\frac{2}{\delta})}{n}}
  +
  \frac73 \, \frac{\ln(\frac{2}{\delta})}{2(n-1)}
\end{align*}
where $\widehat{\mathrm{Var}}(\lambda)$ is a sample variance of losses computed with parameter $\lambda$.

Since we are working with Bernoulli losses, we evaluate the \emph{Hoeffding-Bentkus inequality}, one of the tightest known bounds for such losses.
Computation of the bound relies on the following function (of $t, p \in [0,1]$),
\begin{align*}
  \varepsilon_{\mathrm{hb}}(t, p) = \min\cbr{e^{-n \, \mathrm{kl}(t, p)},\, \P(\mathrm{Bin}(n, p) \leq \lceil n t \rceil)},
\end{align*}
where $\mathrm{Bin}(n, p)$ is a binomial random variable with parameters $n \in \mathbb{N}$ and $p \in [0,1]$
and $\mathrm{kl}(p,q) = p \ln(p/q) + (1-p) \ln((1-p)/(1-q))$ is the relative entropy between two Bernoulli distributions with success probabilities $p$ and $q$, respectively.
Then, the upper bound is given by solving a simple optimization problem
\begin{align*}
  \hat R_{\mathrm{ub}\text{-}\mathrm{hb}}(\lambda) = \sup\cbr{ p \in [0,1] ~:~ \varepsilon_{\mathrm{hb}}(L_n(\lambda), p) \geq \delta }.
\end{align*}

Finally, we consider the so-called \emph{Bernoulli relative-entropy inequality}, a.k.a. the `little kl' inequality (see, for instance, \citealp{maurer2004note}).
Here the upper confidence bound is computed by solving the simple optimization problem
\begin{align*}
  \hat R_{\mathrm{ub}\text{-}\mathrm{kl}}(\lambda) = \sup\cbr{ p \in [0,1] :~ \mathrm{kl}(L_n(\lambda), p) \leq \frac{\ln(\frac{\sqrt{n}}{\delta})}{n}}.
\end{align*}
\citet{bates2021distribution} mentions another, so-called Waudby-Smith-Ramdas (WSR) inequality for the case of non-binary losses, which is tighter for such cases since it adapts better to the variance.
This inequality belongs to the family of concentration inequalities derived through regret analysis of online betting algorithms, first proposed by \citet{JunO19}.
In fact, it was recently shown that WSR inequality is looser than another inequality from this family~\citep{orabona2023tight}, and which notably, for Bernoulli distributions coincides with the Bernoulli relative-entropy inequality considered above.

\section{Calibrating the match function $m$}
\label{sec:match-calibration}
As described in \Cref{sec:match-function}, it is hard to identify if two responses to a query (e.g., one generated by the LLM and another being the ground truth answer) are the same, and hence devising a good match function (to be used in computing the loss $\ell$) is a non-trivial problem. As explained before, we consider score-based match functions . Then, given a similarity score $s:\cX\times\cY\times\cY\rightarrow \real$, it is natural to define $m$ as a thresholded version of the score function: 
\[
m(X, Y', Y) = m_\beta(X, Y', Y) = \one\{s(X,Y,Y') \ge \beta\}
\]
where for any event $E$, $\one\{E\}$ denotes its indicator function and $\beta$ is a threshold to be chosen. Note that although we use the same notation for the similarity function and its threshold as in the definition of the score function in \Cref{sec:score-f}, these are not necessarily the same. 

In this section we assume that $s$ is given (in the experiments we will use different option, such as recall or LLM self-prompting, discussed in \Cref{sec:match-function}), and the goal is to select a threshold $\beta$ so that the match function $m$ reflects the ground truth as much as possible (given $s$).
We can do this based on another calibration set, again, with a slight inconsistency in the notation, denoted by $(X_1,Y_1',Y_1),\ldots,(X_n,Y'_n,Y_n)$, where, for all $i$, $(X_i,Y_i)$ are ground-truth question-answer pairs sampled independently from the data distribution $\cD$, and $Y'_i$ is the model's response to query $X_i$. Whether $Y_i$ and $Y'_i$ agree has to be checked manually, so the size $n$ of this calibration set can be quite small in practice.

If the quality of the responses is monotone in $s$, that is, if $s(X,Y',Y) < s(X,Y'',Y)$ means that $Y''$ is a better response to $X$ than $Y'$ (as suggested by the ground truth response $Y$), then one can use any of the methods discussed in the previous sections, such as \eqref{eq:crc-1}, to select a threshold $\beta$ to get a guarantee on the error the match function makes when comparing responses to the ground truth; here we can define $\ell(X_i,Y'_i,Y_i)$ to be 0 if the match function is correct about comparing $Y_i$ and $Y'_i$ and 1 otherwise.

However, none of our similarity function candidates are monotone, as typically a too high threshold becomes too conservatives and may classify some correct responses $Y'_i$ as incorrect, while a too low threshold may result in incorrect answer classified as correct. Nevertheless, we present next a procedure which, using an upper bound on the performance of $m$, allows us to calibrate the threshold $\beta$ with theoretical guaranties.

Let $C$ be the number of incorrect LLM responses (i.e., when the LLM's response does not match the label according to the human rater) for our $n$ calibration samples. This is the true performance measure. Let $L_2$ denote the number of times the LLM's response is different from the label according to the match function $m$. This is the performance measure that we report when we use $m$ as a surrogate to the true loss. Next we show how proper calibration of $\beta$ can ensure that $L_2$ is an approximate upper bound on $C$, and hence reporting errors based on $m$ can be used to upper bound the true error rate. 
Let $L_1$ denote the number of times the LLM's response is different from the corresponding label, but is classified as the same according to the match function. Then clearly
\[
C \le L_1 + L_2.
\]
While the dependence of $L_2$ on the threshold $\beta$ can be arbitrary in general, it is easy to see that $L_1$ is a monotone decreasing function of $\beta$ (setting a higher threshold $\beta$ either keeps $m(X,Y,Y')$ unchanged or changes it from 1 to 0), allowing the application of conformal prediction to set $\beta$ with theoretical guarantees on the behavior of $L_1$ on new data using the calibration dataset $(X_1,Y'_1,Y_1),\ldots,(X_n,Y'_n,Y_n)$.

For example, setting the value of $\beta$ according to the conformal prediction rule \eqref{eq:crc-1} as
\[
\hat{\beta} = \inf \left\{ \beta: \frac{n}{n+1} \sum_{i=1}^n \big(1-m_\beta(X_i,Y'_i,Y_i)\big) + \frac{1}{n+1} \le \alpha\right\}
\]
guarantees that 
\[
\E[L_1] \le \alpha
\]
on new test data, which implies that on expectation $L_2 + \alpha$ is an upper bound on the number of errors the LLM makes. Note that since $L_2$ is evaluated using the (calibrated) match function $m$, at test time we can use a lot of data, making the measured value of $L_2$ arbitrarily close to its expectation $\E[L_2]$ (we can also use any of the confidence bounds from \Cref{sec:LTT} to upper bound their difference), we can guarantee that the expected number of errors $\E[C]$ made by the LLM satisfies
\begin{equation}
\label{eq:m_tuning}
\E[C] \le \E[L_2] + \alpha \le L_2 + \alpha + \epsilon,
\end{equation}
where $\epsilon \ge \E[L_2]- L_2$ is an upper bound on the difference of $L_2$ and its expectation, which can be made arbitrarily small. Selecting a calibration method which comes with a high-probability guarantee would yield a high-probability version of \eqref{eq:m_tuning}.

\section{Related work}

Uncertainty quantification of machine learning methods is a large and active area of research. We only discuss prior papers that are closely related to our work.

\subsection{Selective classification}

The problem that we study is a case of selective classification~\citep{EW2010,GE2017,lin2022scrib}. Given a classifier, a training set, a confidence parameter, and a desired risk bound, the objective of 
\citet{GE2017} is to design an abstention policy such that the risk is bounded by the desired bound with high probability. They normalize loss by decision rate (one minus abstention rate), which makes loss non-monotonic. \citet{GE2017} propose a binary search procedure. However, given the non-monotonicity of the loss function, the binary search procedure is not guaranteed to find a solution that satisfies the risk condition.

\citet{KJL2020} study selective question answering when the test point might be out-of-domain.  
Selective classification methods are closely related to the \ac{RCPS} approach that we discussed in Section~\ref{sec:LTT}.

\subsection{Abstention in \ac{LLM}s}

There has been a number of recent papers that study abstention in \ac{LLM}s. We only cover approaches that use a pre-trained model, and not those based on fine-tuning \ac{LLM}s. These papers usually consider general metrics for their methods, such as the area under the curve, and do not provide any practical guidance on how to actually choose an abstention policy, which is one of our main contributions. Given a risk tolerance $\alpha$, the policy that these papers implicitly suggest chooses a policy parameter that leads to $\alpha$ loss; while this method comes with no theoretical guarantees, we consider it as a baseline for our calibration methods in our experiments. 

\citet{CZGEDE2023} investigate a number of score functions in designing an abstention mechanism: (i) a likelihood-based score; (ii) using sampling repetition and  counting how many times the sampled output matches exactly (after making the response lower case and removing punctuation) the greedy (zero-temperature) output; (iii) computing sampling diversity defined as the fraction of non-unique answers; and (iv) using self-verification by checking the probability given by the model if the greedy answer is correct. They report that their approach (ii) is generally the best. However, since it considers exact match of the responses, its applicability is limited to short responses only (otherwise exact matching almost never happens in practice).
Furthermore, the resulting abstention policy has no theoretical (statistical) performance guarantee, unlike the one we propose here.

\citet{MLG2023} study a black-box approach to detecting hallucinations by generating multiple responses, and measuring similarity of a reference response and the set of generated responses. 
They consider various measures of similarity, including LLM self-prompting. 
In their experiments, the method that generates multiple responses and uses LLM self-prompting for similarity calculations outperforms other baselines including the one using log-probability scores.
Although the overall approach in this paper is conceptually similar to ours, their self-prompting method only compares responses without contexts, resulting in inferior similarity measures. Similar methods have been studied by \citet{LTS2023}. Furthermore, as discussed at the beginning of this section, the choice of an actual abstention policy is not discussed in either of these papers.

\citet{KuhnARXIV2023} study uncertainty quantification of \ac{LLM}s. Similarly to our work,they propose generating $k$ responses, and clustering them based on their contextual similarities evaluated using a smaller language model. Then they investigate the application of semantic entropy to score model uncertainty.\footnote{Note that their formula (4) for estimating semantic entropy is incorrect, as it gives uniform weight to all clusters.} As entropy measures the uncertainty of the whole output distribution, this method does not seem to be directly applicable to decide between using a given response (e.g., the zero-shot response) or abstain.

\citet{wang2023selfconsistency} study reasoning with LLMs and propose generating a set of `reasoning paths' instead of a final answer.
Here reasoning paths are generated by prompting the model to provide intermediate reasoning steps used to arrive at the answer.
Instead of greedily choosing the `best' answer according to some criterion of the associated reasoning path, the paper proposes to select the most consistent answer.
This approach is complementary to the one we consider in this paper, and in principle, the match function and the score function can be computed using reasoning paths.

\subsection{Using token probabilities to quantify uncertainty}

A popular approach to quantify uncertainty is based on using (normalized) log-probabilities of responses. \citet{KCAHD2022} show that LLMs are well-calibrated at the token level on multiple-choice question-answering tasks when the prompts are in an appropriate format. However, the quality of log-probability scores quickly degrades as the model generates longer texts~\citep{CZGEDE2023,MLG2023,KuhnARXIV2023}. 

\subsection{Asking language models to quantify uncertainty (self-verification)}

\citet{KCAHD2022} propose using LLM self-prompting to measure a model's uncertainty in its responses. More specifically, for a given query, a number of responses are generated, and then the model is queried if the responses are correct. For this query, the log-probability of ``True" is returned as a measure of uncertainty. Related approaches are studied by \citet{mielke2022reducing}. However, \citet{MLG2023} and \citet{KuhnARXIV2023} report that LLM self-verification is not as effective as sampling-based methods (i.e., methods using multiple responses) in quantifying model uncertainty.

\subsection{Applications of conformal prediction in quantifying uncertainty in \ac{LLM}s}

Conformal prediction has been used for quantifying uncertainty in LLMs, but we are not aware of any works that employ conformal prediction in designing an abstention mechanism. \citet{QuachARXIV2023} use conformal prediction to construct confidence sets of text outputs that contain an acceptable answer with a high probability, based on a calibration mechanism applied to log-probability scores. \citet{RavfogelACL2023} propose using conformal prediction to calibrate parameter $p$ in nucleus (top-$p$) sampling. \citet{RDBSTetal2023} consider a multiple-choice-style LLM planning, and use conformal prediction to quantify uncertainty of LLM-based planners.

\subsection{Other uncertainty-quantification methods in deep learning and 
\ac{LLM}s}
\emph{Ensemble methods} are based on the classical idea of bootstrap for confidence estimation \citep{tibshirani1993introduction} where multiple estimators for the regression function, each computed on a perturbed version of the data (e.g. by drawing samples from the empirical distribution over data), are combined.

The empirical distribution of the resulting estimates is then used to construct confidence intervals.
While many of these methods can be interpreted as sample-based approximations to Bayesian methods, model-hyperparameter selection (e.g., scale of perturbations, learning) for ensemble methods is typically done using a validation on holdout data (a subset of the training data).
Many recent papers have studied ensemble methods in the context of deep learning and reinforcement learning \citep{OV2015,LPB2017,MG2020}.
In the context of \acp{LLM}, the methods require training multiple language models, which is very expensive. \citet{osband2023epistemic} introduces epistemic neural networks (epinets), which approximate ensemble methods by training a single network with an artificially injected (controlled) source of randomness.
\citet{rabanser2022selective} proposes to use intermediate model checkpoints to quantify the uncertainty of the final model in its responses.
While these approaches aim to mimic the bootstrap procedure during prediction, their validity is not justified by theoretical considerations, and hence remain heuristic approximations.
\section{Experiments}
\label{sec:experiments}

Our experiments aim to verify three hypotheses: (i) conformal abstention done through \ac{CRC} and \ac{RCPS} is able to mitigate hallucinations as measured by loss \eqref{eq:loss1}, while maintaining a low abstention rate; (ii) the loss \eqref{eq:loss1} is a reasonable measure of detecting hallucinations; and (ii) for longer responses, defining scores using LLM similarity prompting is more effective than the ones based on log-probabilities.

\textbf{Datasets.}
We evaluate our approach on two publicly available question-answering datasets: Temporal Sequences (a dataset from the BIG-bench benchmark of \citealp{srivastava2023beyond}) and TriviaQA \citep{joshi2017triviaqa}.
TriviaQA predominantly contains short answers while Temporal Sequences contains several long answers as well.
We hypothesise that some commonly used scores, such as log-probabilities predicted by the model will not yield a good performance on long answers, and therefore we expect that the calibration procedure combined with log-probability scores will perform worse than the calibration procedure with our proposed match-scores on  Temporal Sequences.

\textbf{Calibration/test splits.}
In each experiment, 20\% of the data is used for testing (holdout sample). Each experiment is performed on subsamples of calibration sets of increasing sizes; moreover, each subsample is drawn with replacement 10 times. We report the resulting average test losses and their standard deviations.
We also report the median for the parameter $\lambda$ of our methods. 

\textbf{Language model.} We use a
Gemini Pro
model \citep{geminiteam2023gemini} to generate outputs and scores.

\textbf{The match function $m$.}
We use a similarity-score-based match functions to compute the loss $\ell$, as described in \Cref{sec:match-function}, and calibrate its threshold according to \Cref{sec:match-calibration}. Thus, first we have to choose a similarity score function with a corresponding threshold.
First we discuss the TriviaQA dataset, which contains short answers. For such cases, typically the F1 score is used in the literature~\citep{joshi2017triviaqa, devlin2019bert}. However, to better accommodate the case that the response may be long and the answer (label) is very short, which significantly reduces the F1 score, we rather consider recall as the similarity score in the experiments. To select the threshold, we 
uniformly sampled 100 question-answer pairs that were not used for calibration or testing, and manually inspected the similarity of the generated response and the true answer.
With respective thresholds of 0.5 and 0.25, the recall and F1 scores make no mistakes, hence, in the experiments we used recall with threshold 0.5 as the match function for this dataset.
According to \eqref{eq:m_tuning}, this implies that any measurement of the error rate in testing with this match function would result in at most 1 percentage-point lower error in expectation than the ground truth.

Selecting an appropriate similarity function for the Temporal Sequences dataset is much harder because it has a large proportion of long answer, which makes the F1 and recall scores much less useful: with the same thresholds as above, on a random sample of 100 question-answer pairs, the F1 score resulted in 45 mistakes while and recall score ended up with 13, after manually checking the validity of the corresponding responses generated by the language model. Therefore, we decided to prompt the LLM to compute the similarity of the response and the true answer, using the similarity-seeking prompt presented in \Cref{app:experiments}. Then we computed the smallest threshold (which was $\hat{\beta}=7$ in this case) so that the number of errors in the 100 datapoints was 4 (as verified by manual inspection); according to \eqref{eq:m_tuning}, this guarantees that the (expected) error rate as measured by the resulting match function (i.e., using the $\hat\beta$-thresholded LLM self-prompting score) is at most 5 percentage-point lower than the true error rate.
Therefore, for the Temporal Sequences dataset we used the LLM self-prompting similarity score with threshold $\hat\beta=7$ to compute the match function. (Note that the same method resulted in 2 mistakes for the TriviaQA dataset.)

\textbf{Calibration methods.}
For calibrated methods with theoretical guarantees, we consider the CRC method (defined in \eqref{eq:crc-1}, and referred to as `Bound in expectation'),
and three variants of the \ac{RCPS} procedure, as described in \Cref{sec:LTT}, with \ac{UCB} given by $\hat R_{\mathrm{ub}\text{-}\mathrm{bern}}$ (referred to as `Emp. Bernstein'), $\hat R_{\mathrm{ub}\text{-}\mathrm{hb}}$ (referred to as `Hoeffing-Bentkus'), $\hat R_{\mathrm{ub}\text{-}\mathrm{kl}}$ (referred to as `Bernoulli KL').

For a given risk tolerance $\alpha$, a simple baseline abstention policy chooses the smallest parameter $\lambda$ that satisfies $L_n(\lambda)\le \alpha$. We do not however have a theoretical guarantee on the risk of this baseline, and as we will show, it might violate the risk condition with small calibration datasets. This method is referred to as `Baseline' in the experiments.  

Note that we do not include the high-probability amplification of CRC discussed in \Cref{prop:hp} in our experiments. 
The risk guarantee provided by this bound is $\P(R(\hat \lambda^*) \leq e \, \alpha) \geq 1 - \delta$, i.e. the bound is inflated by $e$ compared to \ac{RCPS} baselines. So, to properly compare it to other baselines we need to replace $\alpha$ by $\alpha/e$, which makes the guarantee quite conservative and results in a very high abstention rate.

\textbf{Score functions.} We consider calibration of three different scores using the above methods. The first two scores are the \emph{greedy} variants of the score functions proposed and described in Section~\ref{sec:score-f}: In both cases, we take the greedy (zero-temperature) response as the reference response and sample additional $k-1=10$ extra responses at temperature 0.9. Then we either (i) prompt the LLM for the similarity of the reference response and each of the extra responses, and obtain the number of matches between the reference response and the extra responses --- this is referred to as \emph{match count (m.c.)} in the results; (ii) prompt the LLM for the number of matches between the reference response and the extra responses at once, and calculate the expected number of matches using the log-probabilities assigned by the LLM to the responses ``1", ``2", $\ldots$ --- this is referred to as \emph{expected match count (e.m.c.)} in the results. 

The third scoring method is the simple baseline of the log-probability of the zero-temperature response. Another popular score in the literature is the normalized log-probability; however, we only report results with the log-probability score, as in our experiments it always performed at least as good as its normalized version.
This baseline is referred to as \emph{log-probabilities (l.p.)} in the results.

The prompts used in calculating the score functions and some data samples are described in Appendix~\ref{app:experiments}.

\begin{figure*}[!t]
  \begin{center}
       \includegraphics[width=\textwidth]{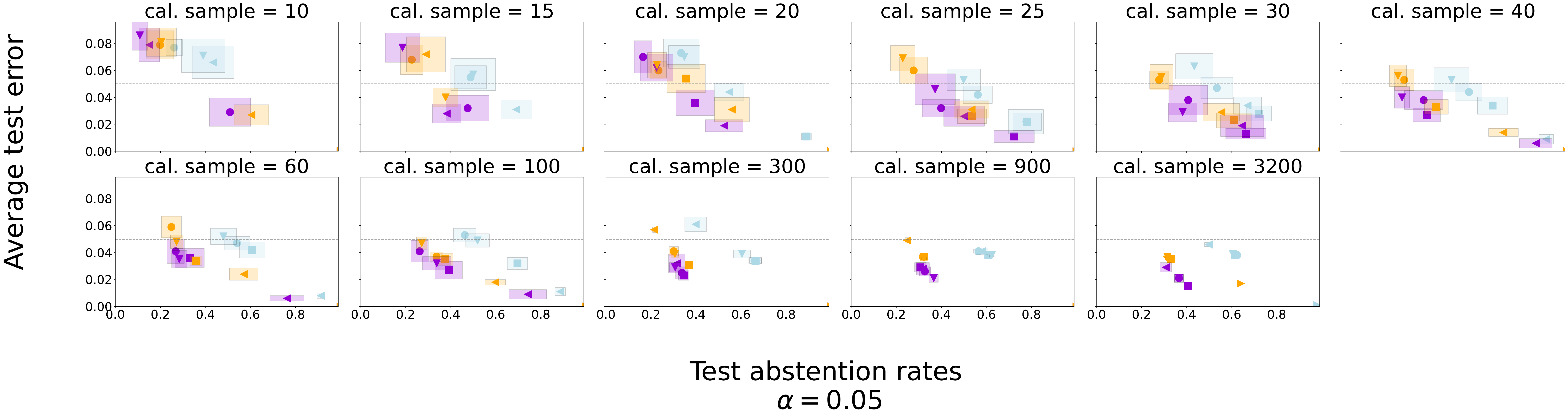}  
       
       \bigskip
       
       \includegraphics[width=\textwidth]{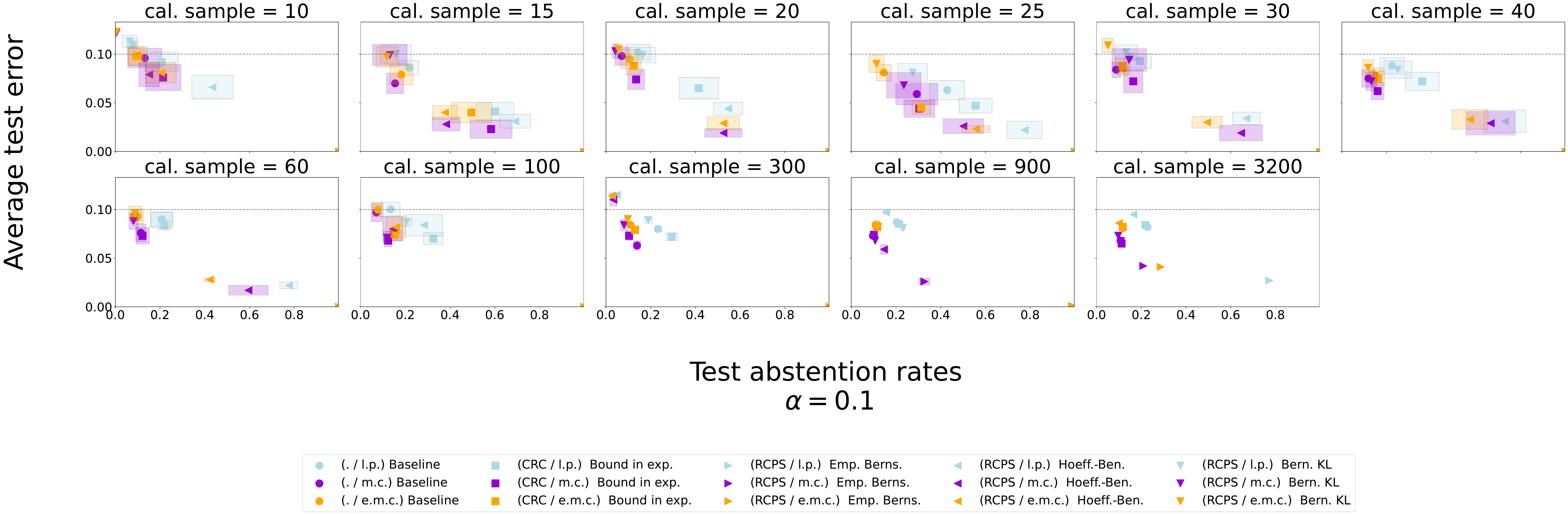}  
  \end{center}
  \caption{Abstention rates vs.\ average test losses on the Temporal Sequences dataset with $\alpha=0.05$ (top) and $\alpha=0.05$ (bottom) for score functions match count (m.c.), expected match count (e.m.c), and the log-probability (l.p.), and for various calibration methods (. denotes the baseline with no calibration). Box widths and heights represent 90\% confidence intervals with Gaussian approximation over abstention rates and average test errors, respectively. The dashed horizontal line represents the target risk bound $\alpha$. \label{fig:temporal_seq}}
\end{figure*}

\subsection{Results on the Temporal Sequences dataset}

We experimented with $4000$ question-answer pairs. The experiments were performed with two risk tolerance levels, $\alpha=0.05$ and $\alpha=0.1$, and we used $\delta=0.05$ failure probability for confidence intervals. 

The results of the experiments are reported in \Cref{fig:temporal_seq}, which shows the average test losses vs. abstention rates on the test sample for calibration datasets of various sizes (the exact numerical results are reported in \Cref{tab:Temporal_Sequences_test_loss_alpha_0.1,tab:Temporal_Sequences_abs_rates_alpha_0.1,tab:Temporal Sequences_cal_loss_alpha_0.1,tab:Temporal Sequences_median_lambdas_alpha_0.1,tab:Temporal Sequences (temp. $=0$ response)_test_loss_alpha_0.05,tab:Temporal Sequences (temp. $=0$ response)_abs_rates_alpha_0.05} in \Cref{sec:additional_tables}).
As expected, we can observe an inherent trade-off between the two metrics:
in particular, a larger abstention rate leads to a smaller test error; however, some methods and baselines exhibit better trade-offs.
For instance, by looking at \Cref{fig:temporal_seq} we can observe that for a sufficiently large calibration sample, it is evident that log-probability scoring performs considerably \emph{worse} regardless of which conformal prediction method (\ac{CRC}/\ac{RCPS}, confidence bound) is used.
At the same time, the proposed match count (m.c.) and expected match count (e.m.c.) proposed perform much better, and the difference between the \ac{CRC} and \ac{RCPS} methods is minimal.

We also observe that the Empirical  Bernstein calibration method is significantly worse than the others; this is expected since here we estimate Bernoulli random variables, and the other two bounds used in the RCPS methods are specialized for this case, unlike the Empirical Bernstein bound, which -- unlike the other two -- would be applicable for non-binary loss functions, as well.
We can also observe that the uncalibrated Baseline methods violate the risk conditions for smaller calibration datasets a bit more than other methods.

\begin{figure*}[!t]
  \begin{center}
       \includegraphics[width=\textwidth]{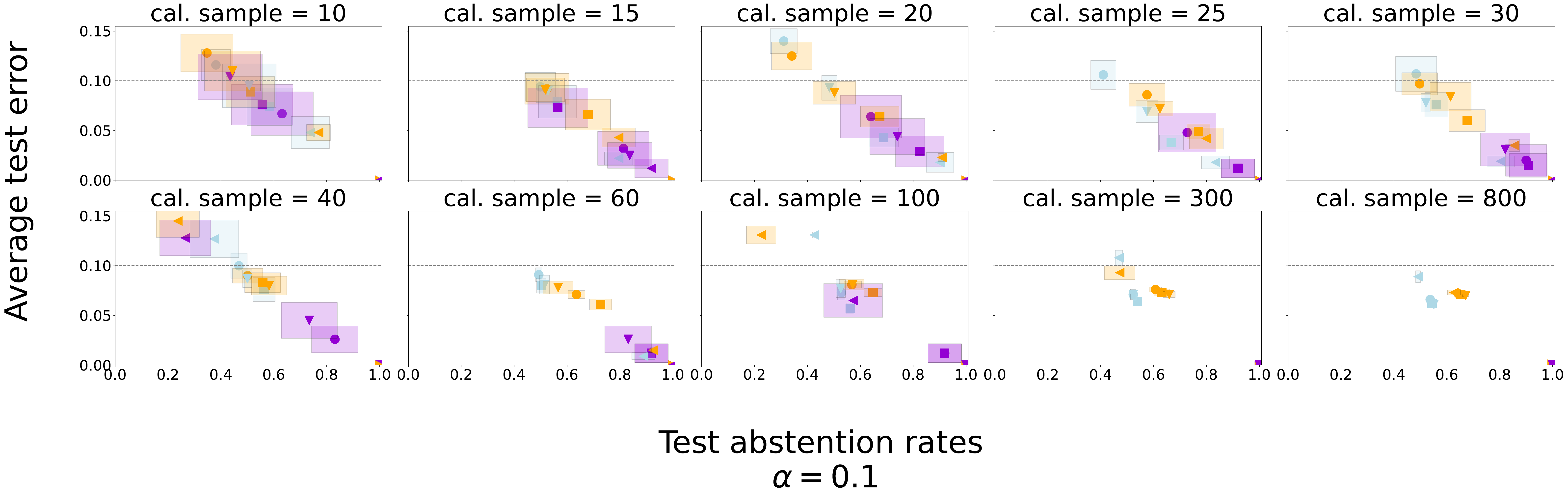}

       \bigskip
       
       \includegraphics[width=\textwidth]{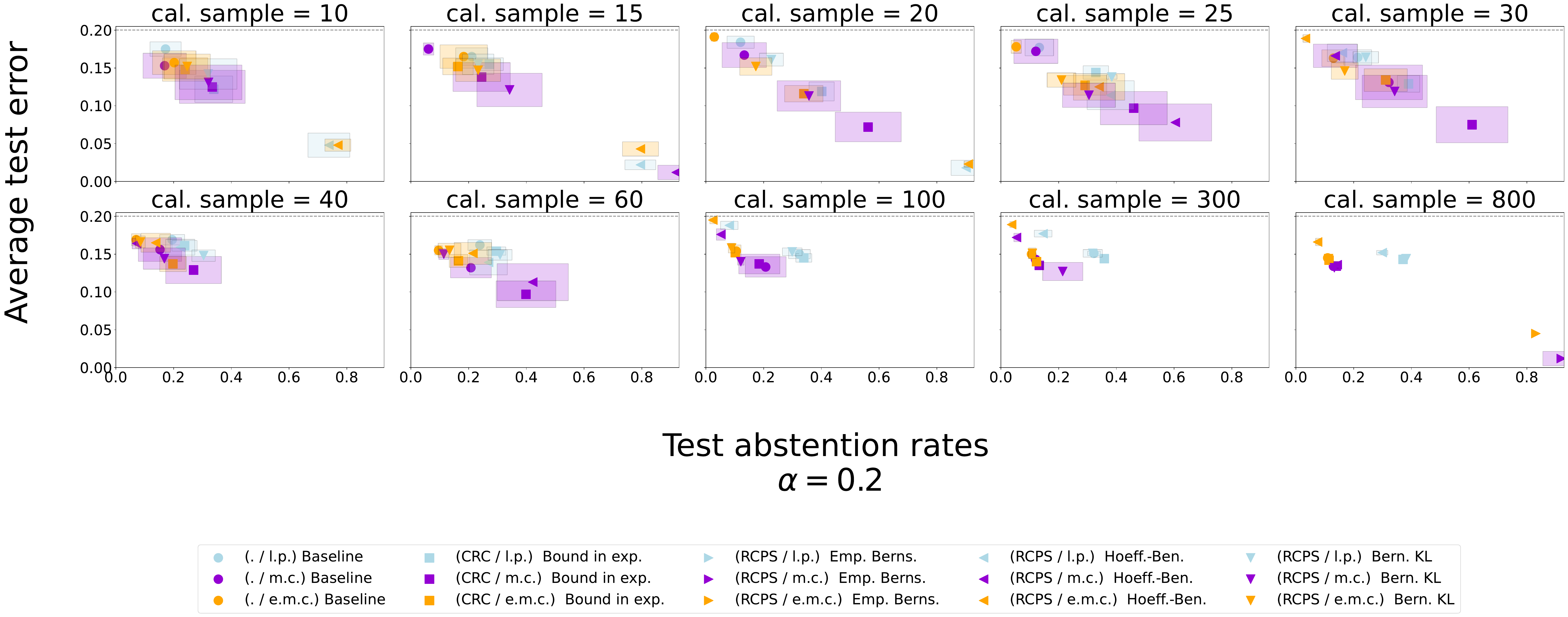}
  \end{center}
  \caption{Abstention rates vs.\ average test losses on the TriviaQA dataset with $\alpha=0.1$ (top) and $\alpha=0.2$ (bottom) for score functions match count (m.c.), expected match count (e.m.c), and the log-probability (l.p.), and for various calibration methods (. denotes the baseline with no calibration). Box widths and heights represent 90\% confidence intervals with Gaussian approximation over abstention rates and average test errors, respectively. The dashed horizontal line represents the target risk bound $\alpha$. \label{fig:triviaqa}}
\end{figure*}

\subsection{Results on the TriviaQA dataset}
We experimented on the TriviaQA dataset in a similar fashion.
In particular, we used $1000$ randomly selected question-answer pairs, performed experiments with two risk tolerance levels, $\alpha=0.1$ and $\alpha=0.2$, and used $\delta=0.05$ failure probability for the confidence intervals.

Similarly to our other experiment, \Cref{fig:triviaqa} shows the trade-off between the abstention rate and test error. As a result of the fact that the LLM tends to generate shorter responses on the queries in this dataset (and the true responses are also short), log-probability scoring is competitive with our proposed scoring methods. In fact, they seem to perform quite similarly in all experiments (with the log-probability scores being somewhat better for $\alpha=0.1$ and worse for $\alpha=0.2$).
As before, we observe that there is a negligible difference between the \ac{CRC} and \ac{RCPS} methods, and that Baseline sometimes violates the risk condition with smaller calibration datasets.  

More details (with the exact numerical results) are presented in \Cref{tab:TriviaQA_test_loss_alpha_0.2,tab:TriviaQA_abs_rates_alpha_0.2,tab:TriviaQA_cal_loss_alpha_0.2,tab:TriviaQA_median_lambdas_alpha_0.2,tab:TriviaQA (temp. $=0$ response)_test_loss_alpha_0.15,tab:TriviaQA (temp. $=0$ response)_abs_rates_alpha_0.15} in \Cref{sec:additional_tables}.

Comparing the experiments for the two datasets, we can conclude that our proposed calibrated abstention methods based on match counts (or expected match counts) are preferable to the variant based on log-probability, as they perform well for both short and long answers, while the log-probability score is significantly worse for questions with long answers.

\section{Conclusions and future directions}

We proposed a conformal calibration and similarity scoring procedure which enables LLMs to abstain in a principled way.
In particular, one of our main contributions is a novel procedure to generate match scores to count the number of similar responses to a query.
When combined with conformal calibration, this scoring procedure achieves a good trade-off between abstention rate and test performance.
Importantly, in experiments over two question-answering datasets, our proposed procedure surpasses the simple baseline scoring procedure of using log-probabilities of the predictor (once more suggesting that LLMs are not well-calibrated).
Finally, we also presented a method to calibrate the match function (based on similarity measures) which is used in automatically evaluating the performance of the LLM at test time, which comes with theoretical guarantees on its accuracy and requires only a small labelled calibration set to tune the threshold.

\bibliography{refs}

\newpage
\appendix
\onecolumn

\section{\ac{CRC} in expectation}
\label{sec:crc-expectation}
\begin{lemma}
  \label{lem:crc-expectation}
  Assume that $\lambda \mapsto \ell(z, \lambda)$ is non-increasing and upper-bounded by $B$ for any example $z \in \cZ$.
  Then, for
  \begin{align*}
    \hat \lambda
    &= \inf\cbr{\lambda \in \Lambda ~:~ L_n(\lambda) + \frac{B}{n} \leq \alpha} \qquad (\alpha > 0)
  \end{align*}
  we have $(1 + \frac1n) \E R(\hat \lambda) \leq \alpha$ assuming that $D_n$ is drawn i.i.d.
\end{lemma}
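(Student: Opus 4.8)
The plan is to reduce the statement to the standard exchangeability (symmetrization) argument behind conformal risk control, with a symmetric auxiliary threshold doing the real work. Write $Z_i = (X_i,Y_i)$, $\ell_i(\lambda) = \ell(Z_i;\lambda)$, and append the test point as $Z_{n+1} = (X,Y)$, so that $Z_1,\dots,Z_{n+1}$ are i.i.d.\ (hence exchangeable, which is all we use) and $R(\hat\lambda) = \E[\ell_{n+1}(\hat\lambda)\mid D_n]$, i.e.\ $\E R(\hat\lambda) = \E[\ell_{n+1}(\hat\lambda)]$ with the expectation taken over all $n+1$ points. Note that the condition defining $\hat\lambda$ is equivalent to $\sum_{i=1}^n \ell_i(\lambda) + B \le n\alpha$.

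First I would introduce the fully symmetric threshold
\[
  \tilde\lambda = \inf\Big\{\lambda\in\Lambda :\; \sum_{i=1}^{n+1}\ell_i(\lambda) \le n\alpha\Big\},
\]
which, unlike $\hat\lambda$, is a permutation-invariant function of $Z_1,\dots,Z_{n+1}$. The key deterministic fact is that $\hat\lambda \ge \tilde\lambda$ almost surely: if $\lambda$ satisfies $\sum_{i=1}^n\ell_i(\lambda)+B \le n\alpha$, then since $\ell_{n+1}(\lambda)\le B$ we get $\sum_{i=1}^{n+1}\ell_i(\lambda) \le \sum_{i=1}^n\ell_i(\lambda)+B \le n\alpha$, so the set defining $\hat\lambda$ is contained in the set defining $\tilde\lambda$, and the infimum over the smaller set is no smaller. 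Because each $\lambda\mapsto\ell_i(\lambda)$ is non-increasing, $\hat\lambda\ge\tilde\lambda$ forces $\ell_{n+1}(\hat\lambda)\le\ell_{n+1}(\tilde\lambda)$ pointwise, hence $\E[\ell_{n+1}(\hat\lambda)]\le\E[\ell_{n+1}(\tilde\lambda)]$.

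Next I would bound $\E[\ell_{n+1}(\tilde\lambda)]$ by symmetry. Since $\tilde\lambda$ is invariant under permuting the $n+1$ exchangeable points, $\E[\ell_j(\tilde\lambda)]$ is the same for all $j\in[n+1]$, so
\[
  \E[\ell_{n+1}(\tilde\lambda)] = \frac{1}{n+1}\sum_{j=1}^{n+1}\E[\ell_j(\tilde\lambda)] = \E\Big[\tfrac{1}{n+1}\sum_{j=1}^{n+1}\ell_j(\tilde\lambda)\Big].
\]
By definition of the infimum, using right-continuity of $\lambda\mapsto\sum_j\ell_j(\lambda)$ (or an $\varepsilon\downarrow 0$ limiting argument from monotonicity if it is only non-increasing), $\sum_{j=1}^{n+1}\ell_j(\tilde\lambda)\le n\alpha$, whence $\tfrac{1}{n+1}\sum_j\ell_j(\tilde\lambda) \le \tfrac{n}{n+1}\alpha$. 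Chaining the inequalities gives $\E R(\hat\lambda) = \E[\ell_{n+1}(\hat\lambda)] \le \tfrac{n}{n+1}\alpha$, that is $(1+\tfrac1n)\E R(\hat\lambda)\le\alpha$.

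I expect no genuine obstacle here, only bookkeeping: (i) nonemptiness of the two infimum sets, which holds whenever $\alpha>0$ and $\ell(z,\cdot)$ eventually reaches $0$ (as in our setting, since a large $\lambda$ forces abstention and hence zero loss), and otherwise can be absorbed into the convention $\inf\emptyset=\sup\Lambda$; and (ii) the passage from ``$\lambda$ lies in the defining set'' to ``the corresponding sum evaluated at the infimum satisfies the inequality'', which is immediate under right-continuity and otherwise requires the monotone limiting argument mentioned above. The single load-bearing idea is the introduction of the symmetric surrogate $\tilde\lambda$ together with the monotone domination $\ell_{n+1}(\hat\lambda)\le\ell_{n+1}(\tilde\lambda)$; everything else is routine.
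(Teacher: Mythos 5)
Your proof is correct and follows essentially the same route as the paper: your symmetric surrogate $\tilde\lambda$ is exactly the paper's $\hat\lambda^{+i}$ (the threshold obtained by appending one independent copy of a sample to the empirical objective), and both arguments combine the feasible-set inclusion giving $\tilde\lambda \le \hat\lambda$, the monotone domination $\ell_{n+1}(\hat\lambda)\le\ell_{n+1}(\tilde\lambda)$, and exchangeability of the $n+1$ points. If anything, your write-up is cleaner --- the paper introduces $n$ perturbed thresholds where one suffices --- and you explicitly flag the right-continuity and nonemptiness caveats that the paper leaves implicit.
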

\begin{proof}
For some error $\alpha > 0$ and a free parameter $\epsilon > 0$ (to be tuned later) consider solution
\begin{align*}
  \hat \lambda
  &= \inf\cbr{\lambda \in \Lambda ~:~ L(D_n; \lambda) + \epsilon \leq \alpha}~.
\end{align*}
Let $\tilde Z_i$ be an independent copy of $Z_i$ and suppose that $\hat \lambda^{+i}$ is a solution obtained by adding loss $\ell(\tilde Z_i, \lambda)/n$ to the objective, namely
\begin{align}
  \label{eq:E_stab_1}
  \hat \lambda^{+i}
  &= \inf\cbr{\lambda \in \Lambda ~:~ L(D_n; \lambda) + \frac{\ell(\tilde Z_{i}, \lambda)}{n} \leq \alpha}~.
\end{align}
Choosing $\epsilon = B/n$, we observe that $\hat \lambda^{+i} \leq \hat \lambda$ for any $i$ (the feasible set of $\hat \lambda^{+i}$ is no smaller than that of $\hat \lambda$).
Hence by the non-increasing property of the loss,
\begin{align*}
  \ell(\tilde Z_i, \hat \lambda^{+i}) \geq \ell(\tilde Z_i, \hat \lambda)
\end{align*}
while summing over losses w.r.t $(\tilde Z_1, \ldots, \tilde Z_n, \tilde Z_i)$ and taking expectation gives
\begin{align*}  
  \frac1n \sum_{j=1}^n \E[\ell(\tilde Z_j, \hat \lambda^{+i})] + \frac{\E[\ell(\tilde Z_i, \hat \lambda^{+i})]}{n}
  \geq \frac1n \sum_{j=1}^n \E[\ell(\tilde Z_j, \hat \lambda)] + \frac{\E[\ell(\tilde Z_i, \hat \lambda)]}{n} = (1 + \frac1n) \E L_n(\hat \lambda)~.
\end{align*}
On the other hand, by identicity (or exchangeability) and using the fact that $\hat \lambda^{+i}$ is a solution to \cref{eq:E_stab_1}
\begin{align*}
  \frac1n \sum_{j=1}^n \E[\ell(Z_j, \hat \lambda^{+i})] + \frac{\E[\ell(\tilde Z_i, \hat \lambda^{+i})]}{n} \leq \alpha
  \quad \Leftrightarrow \quad
  \frac1n \sum_{j=1}^n \E[\ell(\tilde Z_j, \hat \lambda^{+i})] + \frac{\E[\ell(\tilde Z_i, \hat \lambda^{+i})]}{n} \leq \alpha~.
\end{align*}
\end{proof}

\clearpage
\section{Details of the experimental setup}
\label{app:experiments}

\textbf{LLM prompting.} We use the following prompt to query the model. 

\begin{center}

}
\end{center}
\label{tab:TriviaQA_median_lambdas_alpha_0.2}
\end{table*}

\end{document}